\documentclass[11pt]{article}

\usepackage[preprint]{acl}
\usepackage{amsmath}
\usepackage{times}
\usepackage{latexsym}
\usepackage{amsmath}
\usepackage{amssymb}
\usepackage{amsfonts}
\usepackage{xcolor}
\usepackage{amsthm}
\newtheorem{proposition}{Proposition}
\usepackage[T1]{fontenc}

\usepackage[utf8]{inputenc}

\usepackage{microtype}
\usepackage{booktabs}
\usepackage{inconsolata}

\usepackage{graphicx}

\usepackage[most]{tcolorbox}

\newtcolorbox{promptbox}[1][]{
    enhanced,
    breakable,
    colback=gray!6,
    colframe=gray!35,
    boxrule=0.5pt,
    arc=2mm,
    left=6pt,
    right=6pt,
    top=6pt,
    bottom=6pt,
    fonttitle=\bfseries\small,
    coltitle=black,
    title=#1
}

\title{Woodpecker Distillation: Weak Models Diagnose Reasoning Bugs in Strong Models}

\author{
    \textbf{Dayu Wang}$^{1,2}$ \quad
    \textbf{Jiaye Yang}$^{1}$ \quad
    \textbf{Weikang Li}$^{1}$\thanks{Project leads and corresponding authors.} \quad
    \textbf{Jiahui Liang}$^{1}$ \quad
    \textbf{Yang Li}$^{1}$ \quad
    \textbf{Deguo Xia}$^{1}$ \quad
    \textbf{Jizhou Huang}$^{1}$\footnotemark[1] \\
    $^{1}$Baidu Inc. \qquad
    $^{2}$Nanyang Technological University \\
    \texttt{dayu001@e.ntu.edu.sg} \quad
    \texttt{yamseyoung@gmail.com} \\
    \texttt{wavejkd@pku.edu.cn} \quad
    \texttt{\{liangjiahui03, liyang164, xiadeguo, huangjizhou01\}@baidu.com}
}

\begin{document}
\maketitle
\begin{abstract}

Large language models often fail on reasoning tasks despite possessing the capability to solve them. We argue that many such failures arise from localized reasoning bugs in intermediate steps rather than from global incompetence. We show that these bugs are frequently repairable: inserting a short patch generated by a weak probe model after the same strong-model reasoning prefix can redirect the trajectory toward a correct solution.

However, this corrective effect is not reliably internalized by directly fine-tuning on weak patches or repaired trajectories, suggesting that the useful signal lies not in the intervention text itself, but in how it reshapes the model's future reasoning distribution. We therefore propose \textbf{Woodpecker Distillation}, a weak-to-strong training framework that learns from contrastive local interventions. Our method contrasts successful and unsuccessful weak-model patches at the same prefix, constructs a corrective teacher distribution from their induced future token predictions, and distills this signal into the strong model.

Experiments on mathematical reasoning benchmarks show that Woodpecker Distillation consistently improves strong-model performance and outperforms direct imitation baselines. Code and datasets are available at \url{https://anonymous.4open.science/r/Woodpecker-E6BD}.

\end{abstract}
\begin{figure}[t]
    \centering
    \includegraphics[width=0.95\columnwidth]{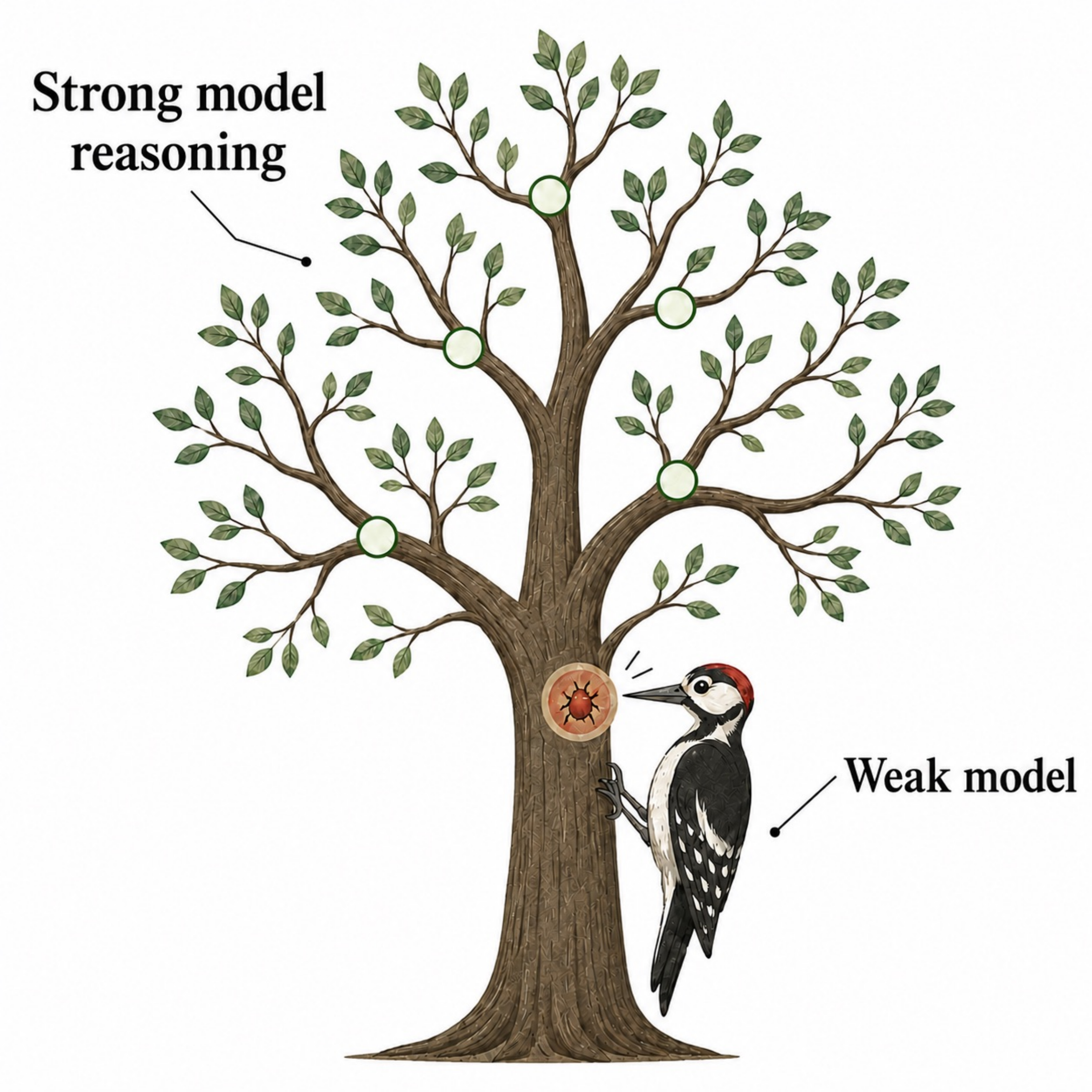}
    \caption{Woodpecker Distillation uses a weak probe model as a woodpecker to identify and repair local reasoning bugs in strong-model reasoning.}
    \label{fig:woodpecker_teaser}
    \vspace{-1em}
\end{figure}
\section{Introduction}

Recent progress in language model reasoning has largely been driven by scaling, stronger supervision, and more effective inference-time search ~\cite{kaplan2020scalinglawsneurallanguage,ouyang2022training,chung2024scaling,yao2023tree}. Yet even strong models still make brittle errors on problems they appear capable of solving ~\cite{hendrycks2021measuring,cobbe2021training}. In this work, we focus on a simple empirical phenomenon: for some failed reasoning traces, inserting a short local guidance at an intermediate prefix can make the model's subsequent continuation arrive at a correct solution. We use the term \emph{reasoning bug} to describe a recoverable failure mode in which a model's current reasoning trajectory leads to an incorrect answer, but a short local guidance inserted at an intermediate prefix can redirect its subsequent continuation toward a correct solution. This definition is operational: it does not require identifying a single erroneous token or step, but only tests whether the failure can be repaired by a local intervention.

In this paper, we ask whether a weak probe model can help expose and repair such reasoning bugs in a stronger model ~\cite{burns2023weak,kenton2024scalable}. Given an intermediate reasoning prefix from a strong model, we insert a short reasoning patch generated by a weak probe model and then let the strong model continue. We find that such patches can frequently turn otherwise incorrect continuations into correct solutions ~\cite{pan2023automatically,huang2024large}. This observation suggests that weak models may provide useful local corrective signals even when they are not stronger end-to-end problem solvers ~\cite{zhou2024weak}. In this view, the weak model is not used as a teacher that supplies complete solutions, but as a diagnostic probe that reveals which local interventions can improve the strong model's continuation.

However, directly training on these repaired trajectories is not straightforward ~\cite{ross2011reduction,bengio2015scheduled}. A natural approach is to fine-tune the strong model to imitate the probe patch itself, or to imitate the full trajectory obtained after inserting the patch ~\cite{zelikman2022star,hsieh2023distilling,kim2016sequence}. We find that these imitation-based strategies are substantially less effective than the intervention that produced the repair. This gap indicates that the useful signal is not simply the surface text of the patch, but how the patch changes the strong model's future reasoning distribution after the same prefix ~\cite{hinton2015distilling,rusu2015policy,lamb2016professor}. Internalizing this effect requires a training signal that captures the downstream consequence of a local intervention, rather than merely copying the intervention text ~\cite{ross2011reduction,de2019causal,chang2015learning}.

We therefore propose \textbf{Woodpecker Distillation}, a weak-to-strong training framework that converts local weak probe model interventions into corrective supervision. For a given strong-model prefix, we sample multiple weak probe model reasoning patches, evaluate whether each patch helps or harms the final outcome, and divide them into positive and negative intervention sets ~\cite{christiano2017deep}. We then compare the future token distributions induced by these two sets and construct a contrastive teacher distribution that emphasizes continuations associated with successful repair. The strong model is trained to match this teacher, thereby learning the corrective effect of the intervention rather than directly imitating its wording.

Our work makes three contributions. First, we identify an empirical phenomenon in weak-to-strong reasoning: some strong-model failures are recoverable reasoning bugs that can be exposed by short weak probe model interventions. Second, we show that these bugs can often be repaired by local patches, but that direct imitation of patches or patched trajectories does not reliably internalize the repair. Third, we introduce Woodpecker Distillation, a contrastive distillation method that turns successful and unsuccessful local interventions into token-level corrective supervision and improves strong-model reasoning.

\section{Related Work}
\paragraph{Weak-to-strong generalization and supervision.}
A growing literature studies how weaker supervision sources can improve stronger models, including weak-to-strong generalization, bootstrapping, and scalable oversight~\cite{lang2024theoretical,somerstep2024statistical,huang2023large,bowman2022measuring}. Most prior work focuses on transferring labels, preferences, or global solution traces from weaker sources to stronger models~\cite{ratner2016data,bai2022constitutional,ho2023large,fu2023specializing}. In contrast, our setting is neither standard weak-label supervision nor direct trace imitation: we use a weak probe model as a \emph{diagnostic probe} that perturbs a strong model's intermediate reasoning and reveals local failure modes~\cite{meng2022locating,ghandeharioun2024patchscopes}.

\paragraph{Process supervision and reasoning improvement.}
Process supervision methods improve reasoning by supervising intermediate steps, reward models, or verifier-guided search~\cite{lai2024step,setlur2025rewarding,zhang2024rest}. These approaches typically assume access to gold intermediate annotations, strong verifiers, or explicit stepwise correctness signals~\cite{zheng2025processbench}. Our method instead uses \emph{contrastive interventions}: we do not supervise which intermediate step is correct in isolation, but infer its usefulness from whether a local patch redirects the downstream reasoning trajectory toward a correct answer.

\paragraph{Self-distillation and preference-based training.}
Self-distillation and preference-style optimization train models from improved targets or pairwise comparisons~\cite{furlanello2018born,gu2024minillm,yuan2024self}. Our method is related in spirit, but differs in granularity and signal construction. Rather than distilling from a stronger teacher or optimizing over full trajectories, Woodpecker Distillation builds a \emph{localized distributional teacher} by contrasting successful and unsuccessful patches at the same reasoning prefix~\cite{yuan2023rrhf,zhao2023slic}. This lets us train on the future impact of local reasoning moves rather than on entire outputs.

\section{Motivation: Local Bugs in Strong-Model Reasoning}

We first examine whether a small local guidance can change a failed continuation. Given a problem $x$, a strong model $A$ generates a reasoning trajectory $y$, and we take an intermediate prefix $c = y_{<t}$. We instantiate $A$ as Qwen3-4B-Instruct-2507~\cite{qwen3technicalreport} and $B$ as Gemma-3-4B-IT~\cite{gemma_2025}, and evaluate on AIME 2024, AIME 2025, MATH-500, Olympiad, and Omni-Hard, where Omni-Hard contains Omni-MATH problems with difficulty greater than 7~\cite{aime24,aime25,hendrycks2021measuring,he2024olympiadbench,gao2024omnimathuniversalolympiadlevel}. We only evaluate problems on which model $A$ fails under greedy decoding. We compare two continuations from the same prefix: direct continuation from $(x,c)$, and patched continuation from $(x,c,b)$, where $b$ is a short guidance patch generated by the weak probe model $B$. Table~\ref{tab:direct_vs_aba} shows that this A--B--A intervention improves Pass@16, with larger gains on harder benchmarks such as AIME 2025 and Omni-Hard. We call this recoverable failure mode a \emph{reasoning bug}: the original continuation fails, but a short local guidance can redirect it toward a correct answer.


\begin{table}[h]
\centering
\small
\begin{tabular}{lccc}
\toprule
\textbf{Metric} & \textbf{Direct} & \textbf{A--B--A} & \textbf{$\Delta$} \\
\midrule
Overall Pass@16 & 24.21 & \textbf{26.16} & +1.95 \\
AIME 2025 Pass@16 & 57.14 & \textbf{61.90} & +4.76 \\
Omni-Hard Pass@16 & 20.62 & \textbf{23.70} & +3.08 \\
\bottomrule
\end{tabular}
\caption{
Comparison between direct continuation and A--B--A patched continuation under the same sampling budget. All numbers are percentages, and $\Delta$ denotes absolute percentage-point improvement.
}
\label{tab:direct_vs_aba}
\end{table}

This phenomenon suggests that weak probe model patches can reveal useful local repair signals. The weak probe model need not solve the full problem; it only needs to provide a short intervention that changes the strong model's subsequent continuation. 

However, directly imitating these repairs is unreliable. Fine-tuning on the weak-model patch learns only what the weak probe model says, while fine-tuning on the full patched trajectory may dilute the local repair signal among many downstream continuation tokens. Direct likelihood training can therefore overfit to the surface form of repaired examples or perturb the strong model's original reasoning distribution, rather than internalize why the local intervention was helpful.

These observations motivate Woodpecker Distillation: instead of copying weak-model patches or repaired trajectories, we train the strong model on how successful and unsuccessful local interventions change its future token distribution after the same prefix.
\begin{figure*}[t]
    \centering
    \includegraphics[width=\textwidth]{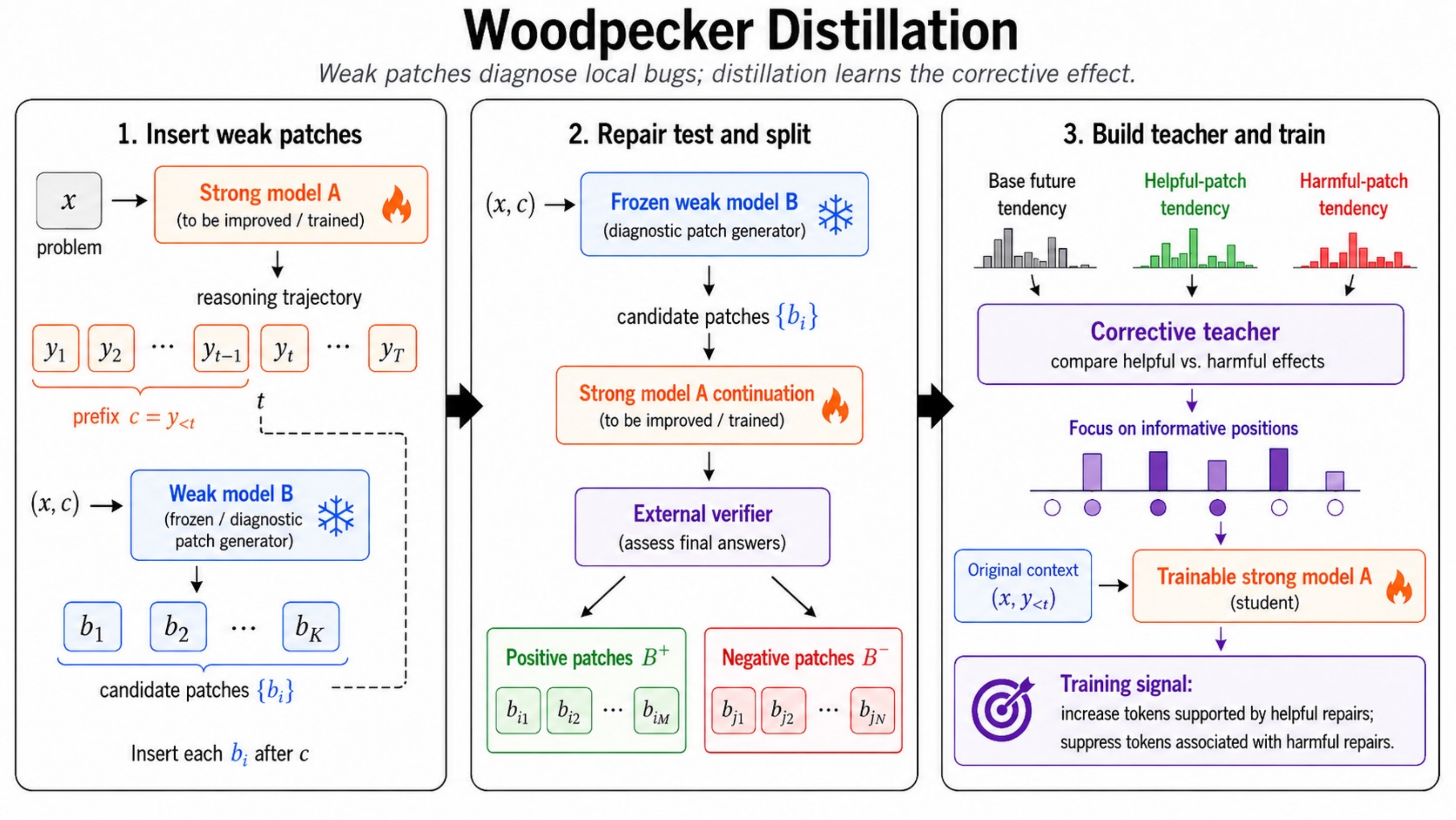}
    \caption{Overview of Woodpecker Distillation.
}
    \label{fig:method}
\end{figure*}

\section{Method}

\subsection{Overview}

Given a strong model $A$, we maintain two copies: a trainable model $A_\theta$ and a frozen reference model $A_{\mathrm{ref}}$ used to construct soft training targets. We also use a weak probe model $B$. Woodpecker Distillation is a weak-guided self-distillation framework: $B$ proposes local patches, $A_{\mathrm{ref}}$ compares their induced future token distributions, and $A_\theta$ learns the resulting corrective distribution on the original, unpatched context.

In this way, the weak probe model is not a teacher to imitate. It only provides interventions that reveal how the strong model's continuation changes. The supervision is produced by contrasting successful and unsuccessful patch-induced distributions under $A_{\mathrm{ref}}$, and the trainable model internalizes this effect without relying on weak-model patches during training or inference. Figure~\ref{fig:method} provides an overview of the
Woodpecker Distillation pipeline.

\subsection{Step 1: Insert Weak Patches}

For each problem $x$, the strong model first samples a direct reasoning trajectory
\[
y=(y_1,\ldots,y_T).
\]
We select one or more candidate insertion positions $s$ in this trajectory and denote the corresponding prefix by
\[
c_s = y_{<s}.
\]
Conditioned on the same problem and prefix, the weak probe model generates $K$ short candidate patches:
\[
\{b_i\}_{i=1}^{K} \sim B(\cdot \mid x,c_s).
\]
Each patch is treated as optional local guidance rather than a target to imitate.

\subsection{Step 2: Repair Test and Split}

We test each candidate patch by inserting it after the prefix $c_s$ and letting the strong model continue from the patched context. The completed solution is judged by an external verifier. According to the final correctness, patches at position $s$ are divided into successful and unsuccessful sets:
\[
\mathcal{B}^{+}_s
=
\{b_i:\mathrm{Verify}(A(x,c_s,b_i))=1\},
\]
\[
\mathcal{B}^{-}_s
=
\{b_i:\mathrm{Verify}(A(x,c_s,b_i))=0\}.
\]
We keep an insertion position for training only when both $\mathcal{B}^{+}_s$ and $\mathcal{B}^{-}_s$ are non-empty. Positions where all patches succeed or all patches fail are discarded, since they do not provide a contrastive intervention signal.

\subsection{Step 3: Build Teacher and Train}

For each retained position $s$, we construct supervision on a short future window of the direct trajectory:
\[
W_s=\{s,\ldots,s+L-1\}.
\]
The direct trajectory is used only to anchor local token positions; the target is not to imitate the original token $y_\tau$. Instead, we compare how successful and unsuccessful patches change the reference model's next-token distribution.

Let $h(x,c_s,b)$ denote the patched context that presents $b$ as local guidance after prefix $c_s$. The prompt is designed to avoid forcing the strong model to imitate the weak-model patch: the model is explicitly instructed to use the candidate only when it is mathematically helpful.

\begin{center}
\setlength{\fboxsep}{8pt}
\setlength{\fboxrule}{0.4pt}
\fcolorbox{black!25}{gray!6}{%
\begin{minipage}{0.92\linewidth}
\small

\noindent\textbf{Patched Prompt Template}
\vspace{0.35em}

\hrule
\vspace{0.65em}

\noindent\textbf{Problem}
\vspace{0.2em}

\noindent
\begingroup
\setlength{\fboxsep}{5pt}
\colorbox{white}{%
\begin{minipage}{0.96\linewidth}
\ttfamily\footnotesize
\{x\}
\end{minipage}
}
\endgroup

\vspace{0.55em}

\noindent\textbf{Current reasoning prefix}
\vspace{0.2em}

\noindent
\begingroup
\setlength{\fboxsep}{5pt}
\colorbox{white}{%
\begin{minipage}{0.96\linewidth}
\ttfamily\footnotesize
\{c\_s\}
\end{minipage}
}
\endgroup

\vspace{0.55em}

\noindent\textbf{Candidate local reasoning turn}
\vspace{0.2em}

\noindent
\begingroup
\setlength{\fboxsep}{5pt}
\colorbox{white}{%
\begin{minipage}{0.96\linewidth}
\ttfamily\footnotesize
\{b\}
\end{minipage}
}
\endgroup

\vspace{0.65em}

\noindent\emph{Use the candidate only if it is mathematically helpful.}

\vspace{0.25em}

\noindent Continue solving the problem step by step.

\end{minipage}
}
\end{center}

For each patch $b$ and position $\tau\in W_s$, we evaluate the frozen reference model:
\[
p^{b}_{\tau}(\cdot)
=
p_{\mathrm{ref}}(\cdot \mid h(x,c_s,b), y_{s:\tau}).
\]
We then aggregate the patch-induced distributions over successful and unsuccessful patches:
\[
p^{+}_{\tau}
=
\frac{1}{|\mathcal{B}^{+}_s|}
\sum_{b\in \mathcal{B}^{+}_s}
p^{b}_{\tau},
\qquad
p^{-}_{\tau}
=
\frac{1}{|\mathcal{B}^{-}_s|}
\sum_{b\in \mathcal{B}^{-}_s}
p^{b}_{\tau}.
\]
Here $p^{+}_{\tau}$ captures future token predictions associated with helpful interventions, while $p^{-}_{\tau}$ captures those associated with unhelpful interventions.

We construct a corrective soft teacher relative to the reference model's original prediction on the unpatched trajectory:
\[
p^{0}_{\tau}
=
p_{\mathrm{ref}}(\cdot \mid x,y_{<\tau}).
\]
For each vocabulary token $v$, define
\[
\Delta_{\tau}(v)
=
\mathrm{clip}
\left(
\log p^{+}_{\tau}(v)-\log p^{-}_{\tau}(v),
-\delta,\delta
\right),
\]
and
\[
q_{\tau}(v)
=
\frac{
p^{0}_{\tau}(v)
\exp\left(\eta \Delta_{\tau}(v)\right)
}{
\sum_{v'}
p^{0}_{\tau}(v')
\exp\left(\eta \Delta_{\tau}(v')\right)
}.
\]
The parameter $\eta$ controls the correction strength, and $\delta$ prevents extreme log-ratio shifts.

\begin{table*}[t]
\centering
\small
\setlength{\tabcolsep}{5pt}
\begin{tabular}{lcccccc}
\toprule
\textbf{Method} 
& \textsc{AIME24} 
& \textsc{AIME25} 
& \textsc{MATH-500} 
& \textsc{Olympiad} 
& \textsc{Omni-Hard} 
& \textbf{Avg.} \\
\midrule
Base strong model 
& 53.3 
& 30.0 
& 85.6 
& 56.5 
& 20.2 
& 53.4 \\

Weak probe model
& 6.7 
& 10.0 
& 72.6 
& 34.6 
& 9.1 
& 36.8 \\

Self-rejection SFT
& 50.0
& 40.0 
& 85.2 
& 57.9 
& 20.4 
& 53.8 \\

Woodpecker Distillation 
& \textbf{63.3} 
& \textbf{43.3}
& \textbf{86.4} 
& \textbf{58.3} 
& \textbf{20.6} 
& \textbf{54.8} \\
\bottomrule
\end{tabular}
\caption{
Greedy decoding accuracy on mathematical reasoning benchmarks. The average is weighted by the number of problems in each dataset.
}
\label{tab:main-greedy}
\end{table*}
To focus training on informative positions, we use a Jensen--Shannon gate:
\[
m_{\tau}
=
\mathrm{clip}
\left(
\frac{
\mathrm{JS}(p^{+}_{\tau},p^{-}_{\tau})
}{
\tau_{\mathrm{js}}
},
0,1
\right).
\]
When successful and unsuccessful patches induce similar distributions, the position receives little weight.

Finally, the trainable model is optimized on the original, unpatched context:
\[
\mathcal{L}_{\mathrm{WD}}
=
\sum_{s}
\sum_{\tau\in W_s}
m_{\tau}\,
\mathrm{CE}
\left(
q_{\tau},
\pi_{\theta}(\cdot \mid x,y_{<\tau})
\right).
\]
Thus, $A_\theta$ never observes the weak-model patch during optimization. The patch is used only to construct the soft teacher, while the student learns to reproduce the corrective distributional effect from the original context alone.

\section{Experiments}
\label{sec:experiments}

\subsection{Experimental Setup}
\paragraph{Models.}
We instantiate the strong model $A$ with Qwen3-4B-Instruct-2507 and the weak probe model $B$ with Gemma-3-4B-IT. The trainable copy $A_\theta$ is initialized from Qwen3-4B-Instruct-2507, while $A_{\mathrm{ref}}$ is kept frozen to construct soft training targets.

\paragraph{Datasets.}
We evaluate all models on five mathematical reasoning benchmarks: MATH-500, Olympiad-test, Omni-Hard, AIME 2024, and AIME 2025. For each generated response, we extract the final answer using regular-expression rules and verify correctness with the \texttt{math-verify} toolkit. To ensure fair comparison, all methods are evaluated with the same system prompt, conversation template, and context-length setting.

For training, we sample problems from Skywork-OR1-RL-Data~\citep{he2025skywork,skywork-or1-2025}. We remove all examples that are identical to, or substantially overlap with, the evaluation benchmarks before training. Additional details on training and experimental settings are provided in Appendix~\ref{app:exp_details}.

\subsection{Baselines}
We compare Woodpecker Distillation with the following baselines.

\paragraph{Base strong model.}
This is the original strong model $A$ without any additional training. It measures the starting reasoning ability of the target model.

\paragraph{Weak probe model.}
This is the original Weak probe model $B$.

\paragraph{Self-rejection sampling fine-tuning.}
Using the same training dataset and sampling budget as our method, this baseline filters verifier-correct trajectories from strong-model self-sampling and fine-tunes on the resulting successful solutions.

\paragraph{Woodpecker Distillation.}
Our method does not imitate the weak-model patch or the patched trajectory. Instead, it contrasts successful and unsuccessful interventions and distills their induced future token distributions into the strong model.

\subsection{Main Results: Greedy Evaluation}
\label{sec:main-results}

Table~\ref{tab:main-greedy} reports greedy decoding accuracy across five mathematical reasoning benchmarks. Woodpecker Distillation improves the base strong model on all datasets, increasing the average accuracy from 53.4 to 54.8. The gains are most pronounced on AIME 2024 and AIME 2025, where accuracy improves by 10.0 and 13.3 percentage points, respectively.

Woodpecker Distillation also outperforms self-rejection SFT under the same training data and sampling budget. This comparison suggests that the improvement is not simply due to filtering verifier-correct trajectories from the strong model. Instead, using verifier outcomes to contrast successful and unsuccessful local interventions provides a more targeted supervision signal than training on complete self-generated correct solutions.

\subsection{Ablation Studies}
\label{sec:ablations}

\begin{table*}[t]
\centering
\small
\setlength{\tabcolsep}{5pt}
\begin{tabular}{lcccccc}
\toprule
\textbf{Method} 
& \textsc{AIME24} 
& \textsc{AIME25} 
& \textsc{MATH-500} 
& \textsc{Olympiad} 
& \textsc{Omni-Hard} 
& \textbf{Avg.} \\
\midrule
Woodpecker Distillation 
& \textbf{63.3} 
& \textbf{43.3}
& \textbf{86.4} 
& \textbf{58.3} 
& \textbf{20.6} 
& \textbf{54.8} \\

\quad w/ same-model probe ($A=B$)
& 56.7
& \textbf{43.3}
& 85.8
& 55.8
& 20.2
& 53.4 \\

\quad w/o negative interventions
& 3.3
& 6.7
& 46.8
& 16.6
& 7.8
& 22.1 \\

\quad w/o JS gate
& 56.7
& 36.7
& 84.8
& 56.1
& 19.9
& 53.0 \\
\bottomrule
\end{tabular}
\caption{
Ablation results on mathematical reasoning benchmarks under greedy decoding.
}
\label{tab:ablation}
\end{table*}

\begin{table*}[t]
\centering
\small
\setlength{\tabcolsep}{5pt}
\begin{tabular}{llcccccc}
\toprule
\textbf{Seed} 
& \textbf{Probe Model}
& \textsc{AIME24} 
& \textsc{AIME25} 
& \textsc{MATH-500} 
& \textsc{Olympiad} 
& \textsc{Omni-Hard} 
& \textbf{Avg.} \\
\midrule
2026
& Gemma-3-4B-IT
& 60.0 
& \textbf{50.0}
& 85.6 
& \textbf{58.8} 
& 20.4 
& 54.7 \\

42
& Gemma-3-4B-IT
& \textbf{63.3}
& 43.3
& \textbf{86.4}
& 58.3
& 20.6
& 54.8 \\

42
& Mistral-7B-Instruct-v0.3
& \textbf{63.3}
& 46.7
& 85.4
& 58.2
& \textbf{21.0}
& 54.6 \\
\bottomrule
\end{tabular}
\caption{
Robustness of Woodpecker Distillation to the choice of probe model.
The strong model is fixed as Qwen3-4B-Instruct-2507, while the auxiliary probe model is varied between Gemma-3-4B-IT and Mistral-7B-Instruct-v0.3.
All results are greedy decoding pass@1 accuracy in percentage points over the same 1,763-problem evaluation suite.
}
\label{tab:probe-robustness}
\end{table*}

We conduct ablations to evaluate the main design choices of Woodpecker Distillation. Table~\ref{tab:ablation} shows that each component contributes to the final performance.

\paragraph{Same-model probe.}
We first replace the weak probe model $B$ with the strong model itself, i.e., $A=B$. This variant removes the asymmetric weak-to-strong probing setup while keeping the rest of the method unchanged. Its average accuracy drops from 54.8 to 53.4, suggesting that weak-model patches provide useful intervention diversity beyond self-generated perturbations from the strong model.

\paragraph{Removing negative interventions.}
We next remove the negative intervention distribution and construct the teacher only from successful patches. This variant collapses substantially, with average accuracy dropping to 22.1. Without the negative distribution, the teacher lacks a reference direction for suppressing harmful intervention-induced shifts, which can over-amplify patch-specific artifacts.

\paragraph{Removing JS gating.}
Finally, we remove the Jensen--Shannon gate by setting $m_\tau=1$ for all positions. Performance decreases from 54.8 to 53.0, showing that not all token positions provide equally reliable repair signals. Weighting positions by the divergence between positive and negative intervention distributions helps focus training on locations where the contrast is informative.

Overall, these ablations support the three core ingredients of Woodpecker Distillation: weak-model probing provides useful local diversity, positive-negative contrast defines the corrective direction, and JS gating filters noisy or uninformative positions.

\subsection{Robustness Analysis}
\label{sec:robustness}

Table~\ref{tab:probe-robustness} evaluates the robustness of Woodpecker Distillation under different random seeds and probe models. The results remain stable across two Gemma-3-4B-IT runs and when replacing the probe with Mistral-7B-Instruct-v0.3, providing initial evidence that the method is not tied to a single weak probe model instance.

\subsection{Cost Analysis}
\begin{figure}[h]
    \centering
    \includegraphics[width=0.75\linewidth]{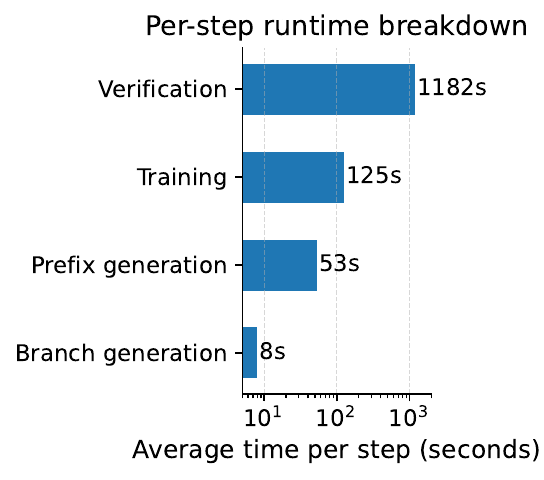}
    \caption{Average per-step runtime breakdown of Woodpecker Distillation, measured on H800 GPUs.}
    \label{fig:runtime-breakdown}
\end{figure}
\label{sec:cost}

We analyze the per-step runtime of Woodpecker Distillation in Figure~\ref{fig:runtime-breakdown}. Each training step consists of four components: generating the strong-model prefix, generating weak-model branches, verifying patched continuations, and applying gradient updates. The runtime is dominated by verification, which takes 1182 seconds on average and accounts for 86.4\% of the total per-step time. In contrast, weak-branch generation takes only 8 seconds, contributing less than 1\% of the total cost.

This breakdown shows that the main overhead of Woodpecker Distillation does not come from the weak probe model itself. Since the weak probe model only generates short local patches, its cost is small compared with evaluating completed continuations. Training takes 125 seconds per step, indicating that optimization is also not the primary bottleneck. Therefore, the efficiency of the method is mainly determined by the cost of verifier-based evaluation, rather than by weak-model patch generation.


\section{Analysis}
\label{sec:analysis}

Woodpecker Distillation constructs a contrastive soft teacher from
local interventions. In this section, we provide a conservative
interpretation of this teacher. Our goal is not to prove that the
resulting token updates causally guarantee better final answers.
Instead, we show that the teacher can be viewed as a KL-regularized
distributional update toward tokens that are more associated with
successful local interventions than unsuccessful ones.

\subsection{Patch-Conditioned Intervention Distributions}

Fix a problem $x$ and a strong-model prefix $c$. A weak probe model
samples a local patch
\[
b \sim B(\cdot \mid x,c).
\]
After inserting $b$ after the prefix, the strong model continues from
the patched context. Let
\[
S(b) \in \{0,1\}
\]
denote the verifier outcome of this patched continuation, where
$S(b)=1$ means that the patched continuation reaches a correct final
answer. This outcome is used only to partition patches; it is not
treated as a token-level label.

The successful and unsuccessful patch distributions are defined as
\[
\mu^+(b\mid x,c)
\propto
B(b\mid x,c)\mathbf{1}\{S(b)=1\},
\]
and
\[
\mu^-(b\mid x,c)
\propto
B(b\mid x,c)\mathbf{1}\{S(b)=0\}.
\]
In practice, these distributions are estimated by averaging over the
finite sets of successful and unsuccessful sampled patches.

For a future token position $\tau$, we evaluate the frozen reference
model under each patched context and aggregate the resulting
next-token distributions:
\[
p^+_\tau(v)
=
\mathbb{E}_{b\sim \mu^+}
\left[
p_{\mathrm{ref}}(v \mid h(x,c,b), y_{s:\tau})
\right],
\]
\[
p^-_\tau(v)
=
\mathbb{E}_{b\sim \mu^-}
\left[
p_{\mathrm{ref}}(v \mid h(x,c,b), y_{s:\tau})
\right].
\]
Here $h(x,c,b)$ denotes the context obtained by inserting patch $b$
after prefix $c$, and $y_{s:\tau}$ is the original trajectory segment
used as a common teacher-forced alignment path. This alignment avoids
comparing next-token distributions at mismatched free-running
continuations, but it should be viewed as a local approximation rather
than a full trajectory-level estimate.

We define the contrastive repair score
\[
r_\tau(v)
=
\log p^+_\tau(v)-\log p^-_\tau(v).
\]
This score should be interpreted as a contrastive proxy for local
repair value: it measures how strongly token $v$ is associated with
distributions induced by successful interventions relative to
unsuccessful ones under the same prefix.

\begin{proposition}[Positive-negative contrast]
If $r_\tau(v)>0$, then token $v$ receives higher probability under the
successful-intervention distribution than under the
unsuccessful-intervention distribution. If $r_\tau(v)<0$, then token
$v$ is relatively more associated with unsuccessful interventions.
\end{proposition}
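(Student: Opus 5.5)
The plan is to unfold the definition of $r_\tau(v)$ and use the strict monotonicity of the logarithm; the only care needed is to make sure the logarithms are well defined. First I would fix a retained position $s$ and a future position $\tau\in W_s$. I would then note that $p^+_\tau$ and $p^-_\tau$ are genuine probability distributions over the vocabulary: each is a mixture, under $\mu^+$ or $\mu^-$, of reference next-token distributions $p_{\mathrm{ref}}(\cdot\mid h(x,c,b),y_{s:\tau})$. Both mixtures are nonempty, because positions are retained only when $\mathcal{B}^+_s$ and $\mathcal{B}^-_s$ are both non-empty. I would also assume, as is standard for softmax-parameterized language models, that $p_{\mathrm{ref}}(v\mid\cdot)>0$ for every token $v$. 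Averaging strictly positive numbers gives strictly positive numbers, so $p^+_\tau(v)>0$ and $p^-_\tau(v)>0$, and $r_\tau(v)=\log p^+_\tau(v)-\log p^-_\tau(v)$ is a finite real number.

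With this in hand, I would rewrite the score as $r_\tau(v)=\log\bigl(p^+_\tau(v)/p^-_\tau(v)\bigr)$. Since $\log$ is strictly increasing on $(0,\infty)$ and $\log 1=0$, the condition $r_\tau(v)>0$ is equivalent to $p^+_\tau(v)/p^-_\tau(v)>1$. Because $p^-_\tau(v)>0$, this is in turn equivalent to $p^+_\tau(v)>p^-_\tau(v)$, which is the first claim. For the second claim, the same argument with the inequality reversed shows that $r_\tau(v)<0$ holds exactly when $p^-_\tau(v)>p^+_\tau(v)$. I would interpret ``relatively more associated with unsuccessful interventions'' as precisely this strict inequality. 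As a remark, both directions are in fact equivalences, and the case $r_\tau(v)=0$ corresponds to equal probability under the two distributions.

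The only step that needs attention is the positivity or well-definedness step. If some $p^\pm_\tau(v)$ could be zero, for example under top-$k$ truncation, the logarithm would be infinite. I would handle this with the conventions $\log 0=-\infty$ and $\infty>0$; the monotonicity argument then still goes through whenever the two probabilities are not both zero. Alternatively, one can note that the clip to $[-\delta,\delta]$ used for $\Delta_\tau$ preserves the sign of $r_\tau$. Finally, I would point out that the statement is purely about the finite-sample estimates: it applies verbatim whether $p^\pm_\tau$ are the population mixtures over $\mu^\pm$ or the empirical averages over the sampled sets $\mathcal{B}^\pm_s$.
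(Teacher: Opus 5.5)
Your argument is correct and matches the paper. The paper gives no separate proof and treats the claim as immediate from $r_\tau(v)=\log p^+_\tau(v)-\log p^-_\tau(v)$ together with the strict monotonicity of $\log$; your extra care about well-definedness (non-empty $\mathcal{B}^\pm_s$, strictly positive softmax probabilities, or the extended-real convention when a probability vanishes) goes a bit beyond what the paper states without changing the approach.
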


\noindent
This proposition is only an associational statement. It does not claim
that increasing the probability of $v$ alone will necessarily make the
final answer correct. Rather, it motivates using $r_\tau(v)$ as a local
directional signal for aligning the model's prediction with the
distributional effect of successful interventions.

\subsection{KL-Regularized Contrastive Teacher}

Let $p^0_\tau$ denote the frozen reference model's original prediction
on the unpatched context:
\[
p^0_\tau(v)
=
p_{\mathrm{ref}}(v \mid x,y_{<\tau}).
\]
We clip the contrastive score to reduce the effect of unstable or
extreme log-ratios:
\[
\tilde r_\tau(v)
=
\mathrm{clip}
\left(
\log p^+_\tau(v)-\log p^-_\tau(v),
-\delta,\delta
\right).
\]

For each token position $\tau$, consider the following optimization
over distributions $q$:
\[
q^*_\tau
=
\arg\max_q
\left[
\mathbb{E}_{v\sim q}\left[\tilde r_\tau(v)\right]
-
\frac{1}{\eta}
\mathrm{KL}(q\|p^0_\tau)
\right],
\]
where $\eta>0$ controls the strength of the update. The first term
encourages probability mass on tokens associated with successful
interventions, while the KL term keeps the teacher close to the
original strong-model prediction.

\begin{proposition}[KL-regularized tilted teacher]
The solution to the above optimization is
\[
q^*_\tau(v)
=
\frac{
p^0_\tau(v)\exp\left(\eta \tilde r_\tau(v)\right)
}{
\sum_{v'}
p^0_\tau(v')\exp\left(\eta \tilde r_\tau(v')\right)
}.
\]
\end{proposition}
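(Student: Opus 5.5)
The plan is to use the Gibbs variational principle: rewrite the objective as a single negative KL divergence to the tilted distribution, plus a constant that does not depend on $q$. Fix $\tau$. Write $Z_\tau=\sum_{v'}p^0_\tau(v')\exp(\eta\tilde r_\tau(v'))$, which is finite and positive because $\tilde r_\tau$ is bounded by $\delta$ and the vocabulary is finite. Define $q^*_\tau$ by the claimed formula. Then for every $v$ with $p^0_\tau(v)>0$,
\[
\eta\tilde r_\tau(v)=\log\frac{q^*_\tau(v)}{p^0_\tau(v)}+\log Z_\tau .
\]

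Substitute this identity into the objective for any $q$ with $\mathrm{KL}(q\|p^0_\tau)<\infty$, i.e.\ $q\ll p^0_\tau$. Multiplying through by $\eta>0$ gives
\[
\eta\,\mathbb{E}_{q}[\tilde r_\tau]-\mathrm{KL}(q\|p^0_\tau)
=\sum_v q(v)\log\frac{q^*_\tau(v)}{q(v)}+\log Z_\tau
=\log Z_\tau-\mathrm{KL}(q\|q^*_\tau).
\]
Any $q$ that puts mass outside the support of $p^0_\tau$ has objective $-\infty$, so it can be discarded. By Gibbs' inequality, $\mathrm{KL}(q\|q^*_\tau)\ge 0$, with equality if and only if $q=q^*_\tau$. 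Hence $q^*_\tau$ is the unique maximizer, and the optimal value is $\tfrac1\eta\log Z_\tau$.

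An alternative route is a Lagrangian argument with a multiplier for $\sum_v q(v)=1$. It yields the stationarity condition $\log q(v)=\log p^0_\tau(v)+\eta\tilde r_\tau(v)-\text{const}$, and strict concavity (from the KL term) gives uniqueness. I will mention this only as a remark, since the completed-KL identity already gives both optimality and uniqueness without boundary issues.

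The main obstacle is bookkeeping rather than substance. The points to handle are:
\begin{itemize}
\item tokens with $p^0_\tau(v)=0$, which also get $q^*_\tau(v)=0$;
\item the conventions $0\log 0=0$ and infinite KL;
\item the fact that clipping guarantees $\tilde r_\tau$ is finite even when $p^\pm_\tau(v)$ vanish, so $Z_\tau$ is well defined.
\end{itemize}
I will state these conventions at the outset so that the one-line identity above is fully rigorous.
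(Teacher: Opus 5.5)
Your proof is correct, but it takes a different route from the paper. The paper's proof is the Lagrangian argument you relegate to a remark. It adds a multiplier for $\sum_v q(v)=1$, sets the derivative in $q(v)$ to zero, reads off $q(v)\propto p^0_\tau(v)\exp(\eta\tilde r_\tau(v))$, and normalizes.

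That derivation has one advantage: it \emph{discovers} the exponential-tilt form instead of requiring you to guess it first. As written, though, it only exhibits a stationary point. It ignores the constraints $q(v)\ge 0$, which matter because the derivative involves $\log q(v)$. It also never says why the critical point is the global maximizer. That requires the strict concavity of $q\mapsto \mathbb{E}_q[\tilde r_\tau]-\frac{1}{\eta}\mathrm{KL}(q\|p^0_\tau)$ on the simplex, plus an argument that the optimum lies in the relative interior of the support of $p^0_\tau$.

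Your completed-KL identity, $\eta\,\mathbb{E}_q[\tilde r_\tau]-\mathrm{KL}(q\|p^0_\tau)=\log Z_\tau-\mathrm{KL}(q\|q^*_\tau)$, verifies rather than derives. In exchange it closes all of those gaps at once:
\begin{itemize}
\item global optimality and uniqueness follow directly from Gibbs' inequality;
\item zero-probability tokens need no separate treatment, because $q^*_\tau$ and $p^0_\tau$ share support, so $q\ll p^0_\tau$ iff $q\ll q^*_\tau$;
\item the optimal value $\frac{1}{\eta}\log Z_\tau$ comes out as a by-product.
\end{itemize}

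One small bookkeeping correction. Clipping makes $\tilde r_\tau(v)$ finite when exactly one of $p^{+}_\tau(v),p^{-}_\tau(v)$ vanishes, but not when both do, since $\log 0-\log 0$ is undefined. In the paper's setting this never happens: $p^{\pm}_\tau$ are averages of softmax outputs of $p_{\mathrm{ref}}$ and hence strictly positive. You should cite that fact instead of attributing finiteness to clipping alone.
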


\begin{proof}
The Lagrangian for the constrained optimization over probability
distributions is
\[
\begin{aligned}
\mathcal{J}(q,\lambda)
=&
\sum_v q(v)\tilde r_\tau(v)
-
\frac{1}{\eta}
\sum_v q(v)\log\frac{q(v)}{p^0_\tau(v)}
\\
&\quad
+
\lambda\left(\sum_v q(v)-1\right).
\end{aligned}
\]
Taking the derivative with respect to $q(v)$ and setting it to zero
gives
\[
\tilde r_\tau(v)
-
\frac{1}{\eta}
\left(
\log\frac{q(v)}{p^0_\tau(v)}+1
\right)
+
\lambda
=
0.
\]
Therefore,
\[
q(v)
\propto
p^0_\tau(v)\exp\left(\eta \tilde r_\tau(v)\right).
\]
Normalizing over the vocabulary gives the stated form.
\end{proof}

This proposition shows that the teacher is an exponentially tilted
version of the original reference distribution. The tilt is determined
by the clipped positive-negative contrast, while the KL term prevents
the target from moving arbitrarily far from the model's original
prediction. If $p^+_\tau=p^-_\tau$, then $\tilde r_\tau(v)=0$ for all
$v$ and $q^*_\tau=p^0_\tau$, so the teacher reduces to the original
reference distribution.

\subsection{Contrast Weighting with Jensen-Shannon Divergence}

The contrastive score is useful only when successful and unsuccessful
patches induce meaningfully different future-token distributions. If
$p^+_\tau$ and $p^-_\tau$ are nearly identical, their log-ratio is
likely to provide little useful training signal. We therefore weight
each position by
\[
m_\tau
=
\mathrm{clip}
\left(
\frac{\mathrm{JS}(p^+_\tau,p^-_\tau)}
{\tau_{\mathrm{js}}},
0,1
\right),
\]
where $\mathrm{JS}(\cdot,\cdot)$ is the Jensen-Shannon divergence.

\begin{proposition}[Contrast-based weighting]
The weight $m_\tau$ is larger when the successful and unsuccessful
intervention distributions are more separated, and becomes small when
the two distributions are similar.
\end{proposition}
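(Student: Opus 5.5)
The plan is to make the informal statement precise through properties of the map $(p^+_\tau,p^-_\tau)\mapsto m_\tau$, which is the composition of $\mathrm{JS}$ with $t\mapsto \mathrm{clip}(t/\tau_{\mathrm{js}},0,1)$. Since the clipping map is nondecreasing in $t$ for fixed $\tau_{\mathrm{js}}>0$, every claim about $m_\tau$ reduces to a claim about $\mathrm{JS}(p^+_\tau,p^-_\tau)$. That claim is then inherited with the natural caveat that $m_\tau$ saturates at $1$ once $\mathrm{JS}\ge\tau_{\mathrm{js}}$. I would therefore read ``larger when more separated'' as monotonicity, not strict monotonicity, and state this explicitly.

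\textbf{The ``small when similar'' direction.} I would first recall $\mathrm{JS}(p,q)=\tfrac12\mathrm{KL}(p\|M)+\tfrac12\mathrm{KL}(q\|M)$ with $M=\tfrac12(p+q)$. This quantity is nonnegative, vanishes iff $p=q$, and is bounded by $\log 2$. Hence $p^+_\tau=p^-_\tau$ gives $m_\tau=0$, which is consistent with the remark after the tilted-teacher proposition that the teacher then equals $p^0_\tau$; the two observations together say such positions contribute nothing. For continuity I would prove a quantitative bound. Using $\mathrm{KL}(p\|M)\le\chi^2(p\|M)$ and $|p-M|=\tfrac12|p-q|$ with $M\ge\tfrac12 p$, one gets $\mathrm{JS}(p,q)\le \tfrac12\|p-q\|_1$ up to a constant; alternatively one can use the known bound $\mathrm{JS}\le \mathrm{TV}\cdot\log 2$. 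This gives $m_\tau\le C\,\|p^+_\tau-p^-_\tau\|_1/\tau_{\mathrm{js}}$, so $m_\tau\to 0$ as the two distributions approach each other.

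\textbf{The ``larger when separated'' direction.} I would use the Pinsker-type lower bound $\mathrm{JS}(p,q)\ge \tfrac18\|p-q\|_1^2$, obtained by applying Pinsker to each KL term with $\|p-M\|_1=\tfrac12\|p-q\|_1$. This yields $m_\tau\ge\min\{1,\|p^+_\tau-p^-_\tau\|_1^2/(8\tau_{\mathrm{js}})\}$. Combining both bounds sandwiches $m_\tau$ between increasing functions of the total-variation separation, which is the rigorous content of the proposition.

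\textbf{Main obstacle.} ``More separated'' has no canonical order on pairs of distributions, and $\mathrm{JS}$ is not a monotone function of $\|p-q\|_1$ alone. I would handle this in two ways. First, I would note that $m_\tau$ is trivially monotone if separation is measured by $\mathrm{JS}$ itself. Second, I would prove monotonicity along mixture paths $p_\lambda=(1-\lambda)q+\lambda p$. By joint convexity of $f$-divergences, $\mathrm{JS}(p_\lambda,q)$ is convex in $\lambda$ and equals $0$ at $\lambda=0$, so it is nondecreasing on $[0,1]$. This gives a clean ordinal statement alongside the two-sided TV bounds.
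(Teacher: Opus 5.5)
Your proposal is correct, and it goes well beyond what the paper does. The paper gives no proof of this proposition: it treats the claim as immediate from the definition $m_\tau=\mathrm{clip}(\mathrm{JS}(p^+_\tau,p^-_\tau)/\tau_{\mathrm{js}},0,1)$ and explicitly calls the weight a heuristic rather than a formal measure. Its implicit argument is exactly your first step: the clip is nondecreasing, and $\mathrm{JS}\ge 0$ with equality iff $p^+_\tau=p^-_\tau$. In that reading, ``separation'' is tacitly measured by $\mathrm{JS}$ itself.

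Your route adds two things the paper lacks. First, a metric-independent meaning: the sandwich $\min\{1,\|p^+_\tau-p^-_\tau\|_1^2/(8\tau_{\mathrm{js}})\}\le m_\tau\le \min\{1, C\|p^+_\tau-p^-_\tau\|_1/\tau_{\mathrm{js}}\}$, obtained from Pinsker and from the $\log 2\cdot\mathrm{TV}$ bound, with constants in nats. Second, an ordinal statement: monotonicity along mixture paths via joint convexity, where you correctly combine convexity, $g(0)=0$, and $\mathrm{JS}\ge 0$. You also correctly flag non-strictness and saturation at $1$, which the paper's wording glosses over.

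One small slip is in the $\chi^2$ route: $M\ge\tfrac12 p$ is not the inequality you need. Used that way, it leaves $\tfrac12\sum_v (p(v)-q(v))^2/p(v)$, which is not controlled by $\|p-q\|_1$. Use instead $|p(v)-M(v)|=\tfrac12|p(v)-q(v)|\le M(v)$, which gives $\chi^2(p\|M)\le\|p-M\|_1=\tfrac12\|p-q\|_1$ and hence $\mathrm{JS}\le\tfrac12\|p-q\|_1$. Your alternative $\mathrm{JS}\le \log 2\cdot\mathrm{TV}$, with $\mathrm{TV}=\tfrac12\|p-q\|_1$, is fine as stated.
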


\noindent
We use this weight as a heuristic measure of contrast strength, not as
a formal statistical confidence estimate. Since $p^+_\tau$ and
$p^-_\tau$ are finite-sample estimates, large divergence can still be
noisy when the number of sampled patches is small. The clipping of
$\tilde r_\tau$ and the weighting by $m_\tau$ are therefore practical
stabilization mechanisms rather than guarantees of causal correctness.

The final training objective is
\[
\mathcal{L}_{\mathrm{WD}}
=
\sum_{\tau}
m_\tau\,
\mathrm{CE}
\left(
q^*_\tau,
\pi_\theta(\cdot\mid x,y_{<\tau})
\right).
\]
The trainable model is optimized on the original unpatched context.
Thus, weak-model patches are used only to construct the contrastive soft
teacher; they are not directly imitated during training.

\section{Conclusion}

We introduced Woodpecker Distillation, a weak-to-strong training framework that uses weak-model patches as diagnostic probes for localized reasoning failures in strong language models. By contrasting the future token distributions induced by successful and unsuccessful local interventions, our method constructs a corrective soft teacher and distills it into the strong model on the original unpatched context. Experiments on mathematical reasoning benchmarks show that this contrastive local supervision improves greedy reasoning accuracy and outperforms direct self-rejection fine-tuning, suggesting that weak probe models can provide useful corrective signals even when they are not stronger end-to-end reasoners.

\section{Limitations}

Our work has several limitations. First, the current evaluation focuses on mathematical reasoning benchmarks, so further experiments are needed to test whether the same form of local repair transfers to broader reasoning domains, such as code generation, factual reasoning, or multi-step planning. Second, Woodpecker Distillation depends on verifier-based feedback to separate successful and unsuccessful interventions; this makes the method most directly applicable to domains where reliable automatic verification is available. Third, the main computational bottleneck is evaluating patched continuations, rather than generating weak-model patches or performing training. Reducing this verification cost is an important direction for making the method more scalable. Finally, our method targets recoverable local reasoning bugs. It may be less effective when the strong model lacks the underlying capability to solve the problem, or when failures arise from global planning errors rather than locally repairable reasoning steps.

\bibliography{custom}

\appendix

\section{Experimental Details}
\label{app:exp_details}
\paragraph{Training data processing.}
We construct the training set from the mathematical training split of Skywork-OR1-RL-Data. 
To avoid benchmark contamination, we first remove problems that are identical or highly similar to examples in the evaluation benchmarks. 
From the remaining pool, we select problems whose difficulty scores under DeepSeek-R1-Distill-Qwen-32B fall between 12 and 15. 
We then evaluate Qwen3-4B-Instruct-2507 with eight sampled responses per problem and retain 2,956 problems with partial success, i.e., problems whose Pass@8 count is greater than 0 and smaller than 8. 
To further include challenging examples that the actor cannot solve under its current sampling distribution, we additionally select 800 problems with the shortest labels from the subset with Pass@8 count equal to 0. 
This yields a final training set of 3,756 mathematical reasoning problems.
\paragraph{Implementation hyperparameters.}
Unless otherwise specified, we use Qwen3-4B-Instruct-2507 as the strong model $A$, including the trainable model $A_{\theta}$ and the frozen reference model $A_{\mathrm{ref}}$, and Gemma-3-4B-IT as the weak probe model $B$. For each problem, the strong model first generates a direct trajectory with greedy decoding. We use candidate insertion positions $s \in \{100, 200, 400\}$ and sample $K=16$ weak-model patches at each position. Each patch is generated with temperature $1.0$ and top-$p=0.95$, with a maximum length of 32 tokens, and is truncated to 16 tokens under the strong-model tokenizer before insertion. Patched continuations use a maximum generation length of 8192 tokens and are judged by final-answer extraction followed by \texttt{math-verify}. We retain an insertion position only when it contains at least one successful and one unsuccessful patch. For the contrastive teacher, we set the correction strength to $\eta=2.0$, the log-ratio clipping threshold to $\delta=4.0$, and the Jensen--Shannon gating threshold to $\tau_{\mathrm{js}}=0.002$. All evaluation runs use the same system prompt, conversation template, and context length across methods.

\paragraph{Artifact licenses and intended use.}
We use publicly released models, datasets, and evaluation tools, including Qwen3-4B-Instruct-2507, Gemma-3-4B-IT, Skywork-OR1-RL-Data, mathematical reasoning benchmarks, and math-verify. We use these artifacts only for research purposes and cite their original creators. Our released code and processed data are intended for research use only, subject to the licenses and terms of the corresponding original artifacts.
\appendix
\end{document}